\documentclass[sigconf]{acmart}

\usepackage{subcaption}
\usepackage{colortbl}
\usepackage{makecell}
\usepackage{microtype}

\AtBeginDocument{%
  }

\copyrightyear{2026}
\acmYear{2026}
\setcopyright{cc}
\setcctype{by}
\acmConference[KDD '26]{Proceedings of the 32nd ACM SIGKDD Conference on Knowledge Discovery and Data Mining V.2}{August 09--13, 2026}{Jeju Island, Republic of Korea}
\acmBooktitle{Proceedings of the 32nd ACM SIGKDD Conference on Knowledge Discovery and Data Mining V.2 (KDD '26), August 09--13, 2026, Jeju Island, Republic of Korea}
\acmDOI{10.1145/3770855.3818087}
\acmISBN{979-8-4007-2259-2/2026/08}

\begin{document}
\newcommand{\E}{\mathbb{E}}
\newcommand{\Var}{\mathrm{Var}}
\newcommand{\Law}{\mathcal{L}}
\newcommand{\sd}{\mathrm{sd}}

\counterwithout{theorem}{section}
\renewcommand{\thetheorem}{\arabic{theorem}}

\newcommand{\Gap}{\mathrm{Gap}}

\title{Expectations vs. Realities: The Cost of MSE-Optimal Forecasting Under Conditional Uncertainty}

\author{Riku Green}
\email{riku.green@bristol.ac.uk}
\affiliation{%
  \institution{The University of Bristol}
  \city{Bristol}
  \country{UK}
}

\author{Zahraa S. Abdallah}
\email{zahraa.abdallah@bristol.ac.uk}
\affiliation{%
  \institution{The University of Bristol}
  \city{Bristol}
  \country{UK}
}

\author{Telmo M Silva Filho}
\email{telmo.silvafilho@bristol.ac.uk}
\affiliation{%
  \institution{The University of Bristol}
  \city{Bristol}
  \country{UK}
}

\begin{abstract}
Multi-step time series forecasting (MSF) is commonly evaluated using point-wise error metrics such as mean squared error (MSE), implicitly treating the conditional mean as a sufficient target. We show that this can be misleading under conditional uncertainty, where the conditional expectation becomes unrepresentative of typical realized values at longer horizons.
We formalize this effect through a conditional uncertainty gap and prove that whenever this gap is nonzero, no deterministic predictor can simultaneously minimize MSE and match the marginal distribution of realized futures. This establishes a fundamental, model-agnostic trade-off between point accuracy and marginal realism in MSF evaluation.
Using controlled stochastic dynamical systems and nine real-world forecasting benchmarks, we empirically characterize the resulting accuracy--realism frontier and \textbf{quantify the practical cost of MSE-only model selection}. As conditional uncertainty increases with forecast horizon, the attainable set expands into a pronounced Pareto front, separating MSE-optimal but under-dispersed predictors from methods that trade accuracy for realistic \emph{marginal variability}. \textbf{Across benchmarks, we find that small relaxations in MSE ($\boldsymbol{\le 5\%}$) frequently unlock disproportionate gains in marginal realism, with median improvements of $\mathbf{17.3\%}$ and gains exceeding $\mathbf{30\%}$ in some datasets.}
We further show that common forecasting strategies systematically occupy different regions of this frontier: direct multi-output predictors concentrate near the accuracy-optimal extreme, while recursive strategies and sample-based inference favors marginal realism. Together, these results expose a structural failure mode of MSE-based evaluation in long-horizon forecasting and recast strategy and inference selection as navigation of an unavoidable accuracy--realism trade-off.
\end{abstract}

\begin{CCSXML}
<ccs2012>
   <concept>
       <concept_id>10010147.10010257</concept_id>
       <concept_desc>Computing methodologies~Machine learning</concept_desc>
       <concept_significance>500</concept_significance>
       </concept>
\end{CCSXML}

\ccsdesc[500]{Computing methodologies~Machine learning}

\keywords{Time series forecasting, 
multi-step prediction, 
conditional uncertainty, 
evaluation metrics, 
data mining
}

\maketitle

\section{Introduction}
\label{sec:intro}
\begin{figure*}[th]
    \centering
    \begin{subfigure}[b]{0.32\linewidth}
        \centering
        \includegraphics[width=\linewidth]{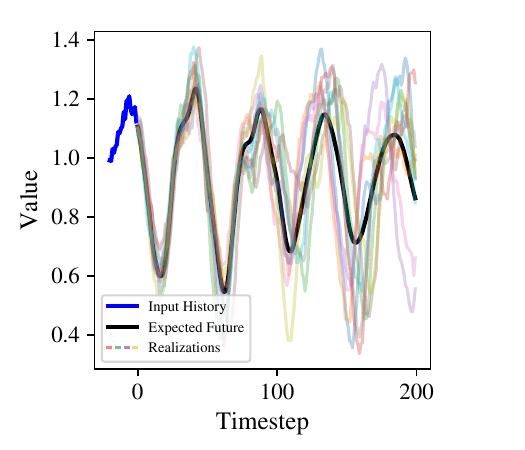}

        \caption{Expectation vs Realizations}
        \label{subfig:gapcurves_a}
    \end{subfigure}
    \hfill
    \begin{subfigure}[b]{0.32\linewidth}
        \centering
        \includegraphics[width=\linewidth]{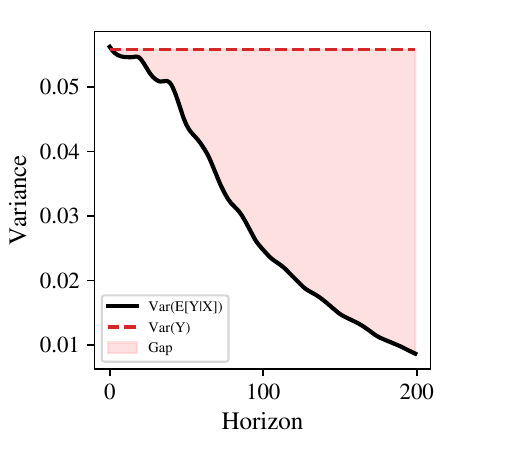}

        \caption{Bayes predictor under-dispersion}
        \label{subfig:gapcurves_b}
    \end{subfigure}
    \hfill
    \begin{subfigure}[b]{0.32\linewidth}
        \centering
        \includegraphics[width=\linewidth]{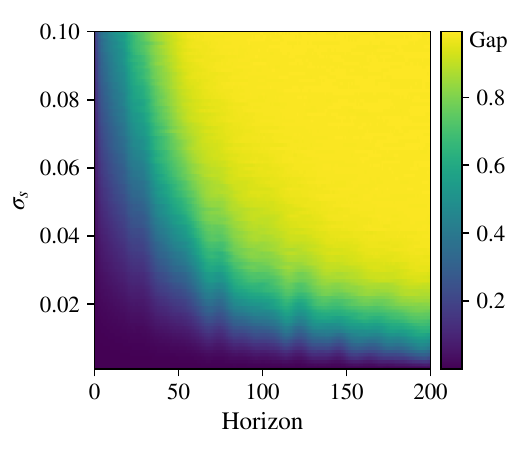}
        \caption{Conditional uncertainty gap}
        \label{subfig:gapcurves_c}
    \end{subfigure}

    \caption{Visualizing the forecasting gap. (a): As the horizon increases, realized trajectories (colored) diverge from the smooth conditional expectation (black), motivating how the ``average'' future can become unrepresentative of typical realized outcomes. (b): The MSE-optimal predictions computed over multiple instances fail to match the target's marginal variability across instances: the predictive marginal variance (black) drops below the true marginal variance (red dashed), yielding a widening variance gap. (c): Increasing stochasticity ($\sigma_s$) induces larger conditional uncertainty, and the gap grows with horizon.}

    \label{fig:gap_curves}
    \Description[Forecasting gap increases with horizon and stochasticity]{Three-panel figure visualizing the forecasting gap. The first panel shows realized trajectories diverging from a smooth conditional expectation as the forecast horizon increases. The second panel shows the MSE-optimal predictor having lower marginal variance than the true target distribution. The third panel is a heatmap showing that the conditional uncertainty gap increases with both stochasticity and forecast horizon.}
\end{figure*}

Multi-step time series forecasting (MSF) is a fundamental problem in machine learning and statistics, with applications in domains such as energy systems, finance, traffic prediction, and climate modeling  \cite{kong2025deep}. Given a window of past observations, the task is to predict a sequence of future values over a fixed horizon. Recent benchmarking typically assesses performance using point-wise error metrics such as mean squared error (MSE) \cite{li2025tsfm_bench, qiu2024tfb_bench, hu2025fintsb_bench, wang2024deep_bench, shao2024exploring_bench, green2025stratify}, implicitly treating the conditional expectation of the future as the desired prediction target.

However, this assumption becomes fragile as the forecast horizon increases. In many real-world systems, uncertainty accumulates over time \cite{bontempi2012machine}, and the conditional distribution of future outcomes given the past becomes increasingly dispersed. This occurs whenever realizations diverge in qualitatively different ways from the same initial condition, as illustrated in Figure~\ref{subfig:gapcurves_a}. As the horizon grows, the expectation--realization gap widens (Figures~\ref{subfig:gapcurves_b} and~\ref{subfig:gapcurves_c}), and the conditional expectation---the target induced by MSE---need not correspond to a typical realized outcome \cite{le2020probabilistic}. Empirical benchmark studies \cite{zhang2024probts_bench} confirm this effect: MSE-optimized forecasts often become increasingly under-dispersed at long horizons. This suggests a tension between forecasting conditional expectations and producing forecasts that are distributionally representative of realized futures.

To formalize this effect, we introduce a conditional uncertainty gap that measures the fraction of future variability that is irreducible given the past. When this gap is nonzero, the conditional expectation becomes unrepresentative of typical realizations. We show that in such regimes, no deterministic predictor can simultaneously minimize mean squared error and match the \emph{marginal} distribution of true future values (across instances at horizon $h$). In other words, pointwise accuracy and marginal realism become fundamentally incompatible objectives. This induces an unavoidable Pareto trade-off \cite{miettinen1999nonlinear} between approximating conditional expectations and reproducing the distribution of realized futures. Crucially, this trade-off is independent of model class, training procedure, or architectural choice: it is a structural property of multi-step prediction under conditional uncertainty. As a consequence, evaluation protocols that select models solely by MSE can systematically favor under-dispersed forecasts, even when alternatives with nearly identical error better match the distribution of realized futures.

This perspective reveals that common forecasting strategies implicitly select different operating points on this trade-off. Direct multi-output predictors, aligned with estimating conditional expectations under squared loss \cite{taieb2014machine}, concentrate near the MSE-optimal extreme and often produce smoothed, under-dispersed forecasts. Recursive strategies, by contrast, propagate variability through time and therefore \emph{can} generate trajectories that more closely resemble typical realizations, but at the cost of higher pointwise error. Probabilistic forecasting models do not remove this tension: different inference rules (e.g., predictive mean or single-sample rollout) simply occupy different regions of the same accuracy--realism frontier.

Our findings support the view that multi-step forecasting cannot be fully characterized by pointwise error alone \cite{zhang2024probts_bench}. Squared loss elicits the conditional mean, but under conditional uncertainty this functional can become unrepresentative of typical realized futures.
This helps explain why recursive, direct, and probabilistic inference rules often dominate in different regimes \cite{taieb2014machine, green2025epistemic, green2025stratify}: each strategy prioritizes a different functional of the future distribution, and whether these priorities align depends on the underlying data-generating process. When the conditional uncertainty gap is small, MSE-optimality and realism are largely aligned; as the gap grows, they separate into an accuracy--realism frontier. Across real-world benchmarks, a modest $5\%$ increase in MSE frequently yields large gains in marginal realism, with a median improvement of $17\%$.

Our goal is not to propose a new forecasting architecture, but to turn this trade-off into a practical model-selection protocol. We recommend keeping MSE as the primary constraint, then using inexpensive distributional diagnostics such as marginal $W_1$ to choose among near-MSE-optimal candidates.

Our contributions are threefold:
\begin{itemize}
    \item \textbf{A theoretical explanation of over-smoothing in long-horizon forecasting.}
    We prove that no deterministic predictor can minimize mean squared error and match the distribution of realized futures. This explains a common long-horizon failure mode: MSE-optimal predictors are necessarily under-dispersed, not because of model misspecification but due to irreducible conditional uncertainty.

    \item \textbf{Quantifying the practical accuracy--realism trade-off.}
    We empirically characterize the resulting accuracy--realism Pareto frontier and show that it is readily attainable in practice. Across synthetic systems and real-world benchmarks, small relaxations in MSE often yield disproportionately large gains in marginal realism, demonstrating that strict MSE-based model selection can incur substantial and avoidable realism costs.

    \item \textbf{Forecasting strategy as a practical control knob.}
    While probabilistic sampling naturally navigates the accuracy--realism trade-off, we show empirically that the \emph{multi-step strategy} itself (direct versus recursive prediction) provides a simple and effective mechanism for moving along the Pareto frontier. This reframes strategy selection as a concrete design choice for trading pointwise error against distributional representativeness, independent of model class.
\end{itemize}


\section{Conditional Uncertainty in MSF}
\label{sec:msf_condUnc}

This section formalizes how uncertainty accumulates in multi-step forecasting and why the conditional mean can cease to be representative of \emph{realized} futures. We aim to characterize intrinsic uncertainty in $Y_h \mid \mathbf{X}$ before our main accuracy--realism result.

\paragraph{Multi-Step Forecasting Setup}

Let $\{y_t\}_{t \in \mathbb{Z}}$ be a real-valued stochastic time series with bounded second moments,
\begin{equation}
\sup_t |y_t| < \infty, 
\qquad 
\mathrm{Var}(y_t) < \infty .
\end{equation}
For simplicity, we assume weak stationarity so that $\mathbb{E}[y_t]$ and $\mathrm{Var}(y_t)$ are time-invariant.
Given a window length $L$, define the past context and the $h$-step-ahead target as
\begin{equation}\mathbf{X} = (y_{t-L}, \ldots, y_{t-1}) \in \mathbb{R}^L, \quad Y_h = y_{t+h}.\end{equation}
Multi-step forecasting seeks a predictor $f_h : \mathbb{R}^L \to \mathbb{R}$ such that $f_h(\mathbf{X})$ approximates $Y_h$.

For a fixed horizon $h$, $Y_h$ is a random variable conditional on the past $\mathbf{X}$. By the law of total variance,
\begin{equation}
\mathrm{Var}(Y_h) 
= \mathrm{Var}\!\left( \mathbb{E}[Y_h \mid \mathbf{X}] \right)
+ \mathbb{E}\!\left[ \mathrm{Var}(Y_h \mid \mathbf{X}) \right].
\end{equation}

The first term, $\mathrm{Var}(\mathbb{E}[Y_h \mid \mathbf{X}])$, quantifies how much the conditional mean varies across different pasts.
The $\mathbb{E}[\mathrm{Var}(Y_h \mid \mathbf{X})]$ term measures the average spread of future outcomes given the same past.
As the forecast horizon $h$ increases, $\mathbb{E}[\mathrm{Var}(Y_h \mid \mathbf{X})]$ typically increases due to noise \cite{taieb2014machine}. This can occur when the unconditional variance of $Y_h$ remains bounded, reflecting the accumulation of uncertainty over time.


We define a horizon-dependent signal-to-noise ratio
\begin{equation}
\mathrm{SNR}_h 
= 
\frac{\mathrm{Var}(\mathbb{E}[Y_h \mid \mathbf{X}])}
{\mathrm{Var}(\mathbb{E}[Y_h \mid \mathbf{X}]) 
+ \mathbb{E}[\mathrm{Var}(Y_h \mid \mathbf{X})]}.
\end{equation}
This quantity measures the fraction of total variability explained by systematic dependence on the past.

Equivalently, we define the \emph{conditional uncertainty gap}
\begin{equation}
\mathrm{Gap}_h 
= 1 - \mathrm{SNR}_h
=
\frac{\mathbb{E}[\mathrm{Var}(Y_h \mid \mathbf{X})]}
{\mathrm{Var}(\mathbb{E}[Y_h \mid \mathbf{X}]) 
+ \mathbb{E}[\mathrm{Var}(Y_h \mid \mathbf{X})]}.
\end{equation}

When $\mathrm{Gap}_h$ is small, realized values at horizon $h$ are tightly concentrated around $\mathbb{E}[Y_h \mid \mathbf{X}]$, so the conditional mean is representative.
When $\mathrm{Gap}_h$ is large, irreducible uncertainty dominates: $\mathbb{E}[Y_h \mid \mathbf{X}]$ explains little of the variance of realized $Y_h$, and even a perfect estimator of the conditional mean can yield forecasts that under-represent the marginal variability of realized futures.

\begin{figure*}[t]
    \centering
    \begin{subfigure}[b]{0.32\linewidth}
        \centering
        \includegraphics[width=\linewidth]{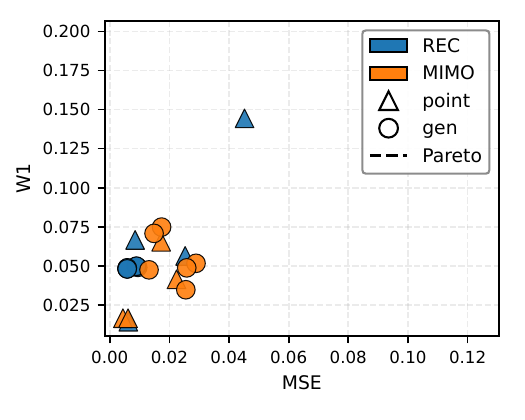}
        \caption{h=20, gap=0.05}
        \label{lowgap}
    \end{subfigure}
    \hfill
    \begin{subfigure}[b]{0.32\linewidth}
        \centering
        \includegraphics[width=\linewidth]{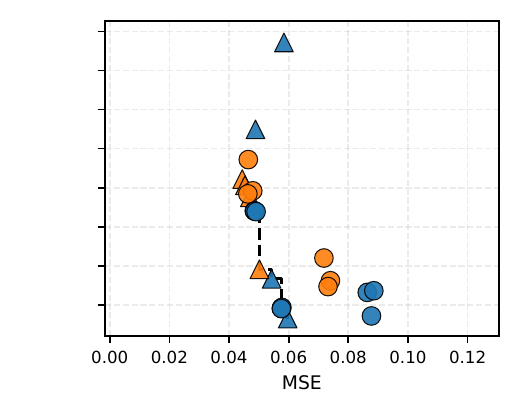}
        \caption{h=100, gap=0.57}
        \label{midgap}
    \end{subfigure}
    \hfill
    \begin{subfigure}[b]{0.32\linewidth}
        \centering
        \includegraphics[width=\linewidth]{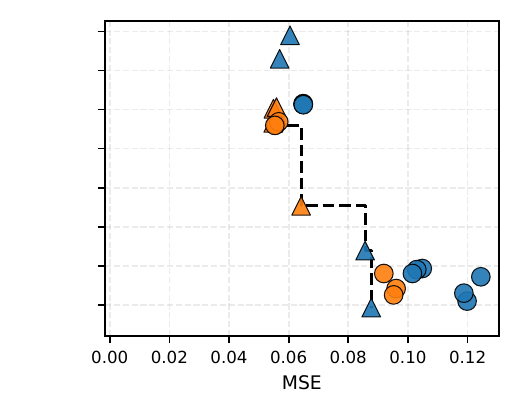}
        \caption{h=200, gap=0.87}
        \label{highgap}
    \end{subfigure}
    \caption{
Attainable accuracy--realism trade-offs under low and high conditional uncertainty.
MSE versus marginal $W_1$  distance for a representative low-gap regime (left) and high-gap regime (right) in the stochastic Mackey--Glass \cite{mackey1977oscillation} system.
Each point corresponds to a trained forecasting strategy.
When the gap is small, methods cluster near a single operating point.
As the gap increases, the attained set of performances expands into a clear Pareto frontier, separating MSE-optimal but under-dispersed predictors from methods that trade accuracy for improved marginal realism.}

\label{fig:pareto_low_high}
\Description[Pareto front widens as conditional uncertainty increases]{Three scatter plots comparing MSE and marginal Wasserstein distance at horizons 20, 100, and 200. At horizon 20, the points cluster near one operating region. At horizons 100 and 200, the points spread into a clearer Pareto frontier, showing a growing trade-off between point accuracy and marginal realism.}
\end{figure*}

\section{Evaluation-Induced Accuracy--Realism Trade-offs}
\label{sec:theory}

MSF evaluation practice primarily treats the conditional mean as the sole prediction target and underlies much of the empirical comparisons made in the literature
\cite{li2025tsfm_bench, wang2024deep_bench, green2025stratify, hu2025fintsb_bench}.
At the same time, empirical studies have repeatedly observed that MSE-optimal forecasts can appear overly smooth or unrepresentative of typical realized futures, particularly at longer horizons
\cite{le2020probabilistic}.

In this section, we show that this phenomenon is not an artifact of model capacity, optimization, or forecasting strategy, but instead reflects a structural limitation of \emph{standard multi-step forecasting evaluation}.
Our goal is to \emph{formalize when and why commonly used evaluation objectives implicitly force a trade-off} between pointwise accuracy and distributional realism.
We focus on deterministic point forecasters and compare pointwise accuracy, measured by MSE, to \emph{marginal} distributional realism measured by the Wasserstein distance $W_1$ \cite{su2017order_wasser}.
This choice reflects both the structure of standard evaluation protocols and the mechanism underlying over-smoothing.

Under typical MSF evaluation, each input context $\mathbf{X}$ is associated with a single realized future $Y_h$, and each deterministic forecaster produces a single prediction $f_h(\mathbf{X})$.
As a result, the conditional distribution $p(Y_h \mid \mathbf{X})$ is not directly observable, and proper scoring rules such as CRPS \cite{gneiting2007strictly, jordan2019evaluating_scoring_forecast} cannot be applied to deterministic forecasts.
The only distributional information consistently identifiable from standard evaluation data is the \emph{marginal distribution across contexts}, obtained from the samples $\{f_h(\mathbf{X}_i)\}_{i=1}^N$ and $\{Y_{h,i}\}_{i=1}^N$.
For this reason, marginal distribution matching represents the \emph{weakest nontrivial notion of realism} that can be evaluated under standard deterministic MSF protocols.
Importantly, it is also directly aligned with the mechanism behind over-smoothing: the collapse of marginal variance induced by MSE-optimal predictors.

We emphasize that marginal realism does not guarantee full trajectory realism.
However, failure at the marginal level necessarily implies failure under any stronger notion of realism.
Since any trajectory-level or conditional distributional match implies equality of the corresponding horizon-wise marginals by projection, failure to match the marginal distribution at any horizon precludes these stronger notions of realism (Appendix~\ref{app:additional_theory}).
We further show that analogous incompatibilities arise at the trajectory level and for stochastic predictors under squared loss.
Thus, the phenomenon is not specific to marginal $W_1$ nor to deterministic forecasting.

\paragraph{Setup.}
Let $\mathcal{L}(Z)$ denote the (marginal) law of a random variable $Z$.
Let $f_h(\mathbf{X})$ be any deterministic predictor of the future value $Y_h$ given the past $\mathbf{X}$.
We evaluate such predictors using two criteria:
\begin{enumerate}
\item \textbf{Pointwise accuracy}, measured by mean squared error,
\begin{equation}
\mathcal{R}_{\mathrm{MSE}}(f_h)
= \mathbb{E}\!\left[(f_h(\mathbf{X}) - Y_h)^2\right].
\end{equation}
\item \textbf{Distributional realism}, measured by agreement between the marginal laws
$\mathcal{L}(f_h(\mathbf{X}))$ and $\mathcal{L}(Y_h)$.
\end{enumerate}

Under squared loss, the first criterion is uniquely minimized by the conditional expectation
$f_h^\star(\mathbf{X}) = \mathbb{E}[Y_h \mid \mathbf{X}]$.
The second criterion requires that the marginal distribution of predictions match that of realized futures.
Standard evaluation therefore implicitly asks for both properties simultaneously.
The following theorem shows that, whenever conditional uncertainty is non-degenerate, this objective is internally inconsistent.

\begin{theorem}[Accuracy--Realism Trade-off]
\label{thm:tradeoff}
Assume $\mathrm{Var}(Y_h)\in(0,\infty)$ and $\mathrm{Gap}_h>0$.
Then no deterministic predictor $f_h$ can simultaneously
\begin{enumerate}
\item minimize mean squared error,
\item and match the marginal distribution of $Y_h$.
\end{enumerate}
More precisely, if $f_h^\star(\mathbf{X})=\mathbb{E}[Y_h\mid\mathbf{X}]$ denotes the MSE-optimal predictor, then
\begin{equation}
W_1\!\left(\mathcal{L}(f_h^\star(\mathbf{X})),\,\mathcal{L}(Y_h)\right) > 0 .
\end{equation}
Conversely, any deterministic predictor whose marginal distribution satisfies
$\mathcal{L}(f_h(\mathbf{X}))=\mathcal{L}(Y_h)$ must incur strictly larger mean squared error than $f_h^\star$.
\end{theorem}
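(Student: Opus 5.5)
The argument rests on a variance comparison, using the law of total variance from Section~\ref{sec:msf_condUnc}. Write $m(\mathbf{X})=\E[Y_h\mid\mathbf{X}]$. We have $\Var(Y_h)=\Var(m(\mathbf{X}))+\E[\Var(Y_h\mid\mathbf{X})]$, and by definition $\Gap_h=\E[\Var(Y_h\mid\mathbf{X})]/\Var(Y_h)$. Since $\Var(Y_h)\in(0,\infty)$, the assumption $\Gap_h>0$ is equivalent to $\E[\Var(Y_h\mid\mathbf{X})]>0$. Hence
\begin{equation}
\Var(f_h^\star(\mathbf{X}))=\Var(Y_h)\,(1-\Gap_h)<\Var(Y_h).
\end{equation}
So the MSE-optimal predictor is strictly under-dispersed. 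Two laws with different (finite) second moments cannot be equal, because equal laws have equal variances. Therefore $\Law(f_h^\star(\mathbf{X}))\neq\Law(Y_h)$.

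Next I convert inequality of laws into $W_1>0$. Both laws have finite first moments (finite variance), so $W_1$ is a genuine metric on the space of probability measures with finite first moment. In particular, $W_1(\mu,\nu)=0$ if and only if $\mu=\nu$. To make this self-contained, I would use the one-dimensional quantile representation $W_1(\mu,\nu)=\int_0^1|F_\mu^{-1}(u)-F_\nu^{-1}(u)|\,du$. This integral vanishes only if the quantile functions agree almost everywhere, which forces the laws to be equal. This gives the displayed strict inequality.

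For the converse, I invoke the orthogonal decomposition of squared loss. For any square-integrable $f_h$,
\begin{equation}
\mathcal{R}_{\mathrm{MSE}}(f_h)=\mathcal{R}_{\mathrm{MSE}}(f_h^\star)+\E\!\left[(f_h(\mathbf{X})-m(\mathbf{X}))^2\right],
\end{equation}
since the cross term vanishes by the tower property. Now suppose $\Law(f_h(\mathbf{X}))=\Law(Y_h)$. Then $\Var(f_h(\mathbf{X}))=\Var(Y_h)>\Var(m(\mathbf{X}))$, so $f_h(\mathbf{X})\neq m(\mathbf{X})$ on a set of positive probability. Otherwise the two would be almost surely equal and hence have equal variances. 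The excess term is therefore strictly positive, which gives the strict MSE increase. The same argument, applied to $f_h$ equal to $f_h^\star$ almost surely, shows that no minimizer can match the marginal. This settles the two-item "no predictor does both" formulation, because the MSE minimizer is unique up to almost sure equality.

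The main obstacle is only bookkeeping. I must handle the cases where $f_h(\mathbf{X})$ lacks a finite second moment: there the MSE is infinite, so the strict inequality is trivial. I must also state carefully that the minimizer is unique only almost surely. The paper's extra assumption $\sup_t|y_t|<\infty$ guarantees all the needed moments.
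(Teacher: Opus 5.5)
Your proposal is correct and follows the same route as the paper's proof. The law of total variance gives $\Var(f_h^\star(\mathbf{X}))=(1-\Gap_h)\Var(Y_h)<\Var(Y_h)$, so the laws differ and $W_1>0$. Conversely, a marginal-matching predictor has variance $\Var(Y_h)$ and so cannot equal $f_h^\star(\mathbf{X})$ almost surely. You also make explicit what the paper's sketch leaves implicit: the orthogonal decomposition of the MSE, which gives uniqueness of the minimizer up to a.s.\ equality, and the fact that $W_1$ vanishes only between identical laws with finite first moment.
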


\begin{proof}[Proof sketch]
Under squared loss, the unique minimizer of pointwise risk is the conditional expectation
$f_h^\star(\mathbf{X})=\mathbb{E}[Y_h\mid\mathbf{X}]$.
By the law of total variance,
\[
\mathrm{Var}(Y_h)
= \mathrm{Var}(\mathbb{E}[Y_h\mid\mathbf{X}])
+ \mathbb{E}[\mathrm{Var}(Y_h\mid\mathbf{X})].
\]
When $\mathrm{Gap}_h>0$, the second term is strictly positive, implying that
$\mathrm{Var}(f_h^\star(\mathbf{X})) < \mathrm{Var}(Y_h)$.
Since the two variables share the same mean, their marginal distributions cannot coincide.
For one-dimensional distributions with finite second moments, this variance mismatch implies a strictly positive $W_1$ distance.
The converse follows because any predictor matching the marginal distribution of $Y_h$ must have variance $\mathrm{Var}(Y_h)$ and therefore cannot coincide almost surely with $f_h^\star(\mathbf{X})$, implying strictly larger MSE.
\end{proof}

\subsection{Implications for Forecasting Objectives}
\label{sec:implications}

\textbf{The trade-off is induced by evaluation, not by the learning algorithm.}
When $\mathrm{Gap}_h>0$, Theorem~\ref{thm:tradeoff} shows that no deterministic forecaster can be both MSE-optimal and marginally realistic under standard evaluation.
This under-dispersion of the conditional mean is therefore not a consequence of limited model capacity, suboptimal optimization, or the choice of multi-step forecasting strategy, but instead reflects the conditional distribution of $Y_h$ itself.

\textbf{Forecast evaluation implicitly selects an operating point.}
Deterministic forecasters therefore lie on an intrinsic accuracy--realism Pareto frontier.
The conditional expectation occupies the MSE-optimal extreme, while moving toward marginal realism necessarily incurs additional pointwise error.
This explains why empirical comparisons that rank models solely by MSE may systematically favor oversmoothed forecasts.
Learning a predictive distribution does not eliminate this tension: deployed forecasts still arise from a deterministic decision rule (e.g., using the mean versus sampling), which selects an operating point on the same frontier.
In Section~\ref{sec:experiments}, we show empirically that this trade-off arises in practice and grows with the conditional uncertainty gap, and that trajectory-level realism metrics exhibit the same qualitative frontier structure.

\section{Empirical Characterization of the Trade-off}
\label{sec:experiments}

Given the intrinsic frontier implied by Section \ref{sec:theory}, our goal is to characterize its empirical manifestation and quantify the practical cost of MSE-only evaluation. Our empirical results support three claims: (i) the attainable accuracy–realism frontier expands with conditional uncertainty; (ii) small relaxations in MSE frequently unlock disproportionate gains in realism; and (iii) forecasting strategies and inference rules systematically populate different regions of this frontier. The remainder of this section supports these claims in controlled systems and real-world benchmarks.

We measure marginal realism using $W_1$ distance, which directly aligns with Theorem~\ref{thm:tradeoff}.
To ensure conclusions are not specific to a single notion of realism, we also report trajectory-level and multivariate realism metrics. Dynamic Time Warping (DTW), Vector Wasserstein ($W_{1-vec}$), order-preserving Wasserstein (OPW), and fused Gromov--Wasserstein (FGW)); additional robustness are provided in Appendices~\ref{app:robustness} 
and \ref{app:what_each_metric}.
These serve as robustness checks; our main empirical claims are stated in terms of marginal $W_1$, which is the weakest distributional notion identifiable under standard deterministic MSF evaluation.

Across datasets, we evaluate the same base predictors under recursive (REC) and direct multi-output (MIMO) strategies.
Our deterministic predictors include Linear Ridge Regression (LR), Multilayer Perceptrons (MLP), and Decision Trees (DT).
We prioritize empirical coverage over state-of-the-art accuracy. To confirm the incompatibility is model-agnostic, we replicate results for Chronos \cite{ansari2024chronos} in Appendix \ref{app:chronos}.
We additionally train conditional normalizing flows \cite{dinh2016density_realnvp} and evaluate them via deterministic decision rules (e.g., predictive mean and single-sample rollouts), allowing us to compare how probabilistic models \emph{navigate} the attainable trade-offs through inference choice.
Implementation details and hyperparameters are deferred to Appendix~\ref{app:data}. 
For each horizon $h \in [1,\dots,200]$, each trained method induces a point in objective space $(\mathrm{MSE}_h, W_{1h})$ (lower is better on both axes).
We summarize the best achievable compromises by the empirical Pareto front over the evaluated methods.

\subsection{Controlled Trade-off Geometry}
We begin with a controlled synthetic system in which conditional uncertainty can be directly measured.
This allows us to relate the geometry of the attainable accuracy--realism frontier to the conditional uncertainty gap defined in Section~\ref{sec:theory}.

We generate time series from a stochastic Mackey--Glass system \cite{mackey1977oscillation} with additive state noise of magnitude $\sigma_s$.
Varying $\sigma_s$ continuously controls the level of conditional uncertainty while keeping the dynamics bounded. 

For each evaluation context $\mathbf{x}_i$, we approximate the conditional distribution $p(Y_h\mid \mathbf{X}=\mathbf{x}_i)$ using $M=1000$ independent stochastic rollouts.
This enables direct estimation of the conditional uncertainty gap ${\mathrm{Gap}}_h$ as well as the evaluation objectives $\mathrm{MSE}_h$ and $W_{1h}$.
Full simulation details are provided in Appendix~\ref{app:synthetic_details}.

Figure~\ref{fig:gap_curves} shows that ${\mathrm{Gap}}_h$ increases monotonically with both forecast horizon $h$ and noise level $\sigma_s$.
The system therefore spans regimes ranging from low conditional uncertainty, where the conditional expectation is representative of typical futures, to high uncertainty, where it is not.
This controlled setting matches the assumptions of the theoretical analysis and provides a reference for interpreting the geometry of empirical trade-offs.

\textbf{Pareto Geometry Under Low and High Gap}.
We explore how the accuracy--realism trade-off from Theorem~\ref{thm:tradeoff} appears in practice.
We focus on the geometry of the \emph{attainable} trade-offs induced by a finite set of trained models.

In low-gap regimes (Figure \ref{lowgap}), methods cluster tightly in objective space.
Here, minimizing MSE also yields low $W_1$ discrepancy, indicating that the conditional expectation remains representative of marginal outcomes and the attainable frontier is effectively degenerate.

As ${\mathrm{Gap}}_h$ increases, the attainable set expands into a clear Pareto frontier.
Methods separate along the accuracy--realism axis: some concentrate near the MSE-optimal extreme by collapsing toward the conditional expectation, while others trade pointwise accuracy for improved marginal realism.
This separation reflects the incompatibility identified in Theorem~\ref{thm:tradeoff} and shows how it materializes under different forecasting approaches.

\textbf{Trade-off geometry grows with conditional uncertainty.}
To summarize the size of the attainable accuracy–realism trade-off, we measure the length of the Pareto front induced by the trained forecasters.
Longer fronts indicate a wider set of non-dominated operating points.
Figure~\ref{fig:parlen_vs_gap} shows that Pareto front length increases monotonically with the estimated gap ${\mathrm{Gap}}_h$ across all realism metrics, indicating an expanding attainable trade-off as conditional uncertainty grows.

\begin{figure}[t]
    \centering
    \includegraphics[width=\linewidth]{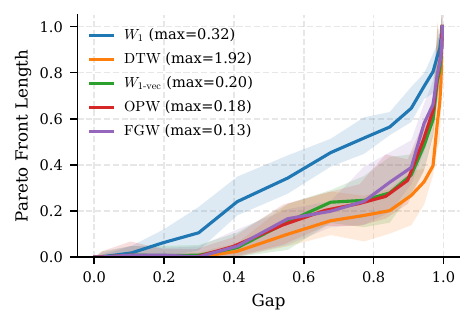}
\caption{
Pareto front length as a function of the estimated conditional uncertainty gap for different realism metrics.
For comparability, front lengths are normalized by the maximum raw length attained by each metric (reported in the legend).
Curves show median values within quantile bins of $\mathrm{Gap}$, with shaded regions indicating interquartile ranges.
All metrics exhibit a monotonic increase in attainable trade-off size with gap, indicating that higher conditional uncertainty induces a wider accuracy--realism frontier.}

\label{fig:parlen_vs_gap}
\Description[Pareto front length grows with uncertainty gap]{Line plot showing normalized Pareto front length as a function of the estimated conditional uncertainty gap for several realism metrics. All curves increase with the gap, with shaded bands showing interquartile ranges, indicating that higher conditional uncertainty creates a wider accuracy--realism frontier.}
\end{figure}

\begin{figure}[t!]
    \centering
    \includegraphics[width=\linewidth]{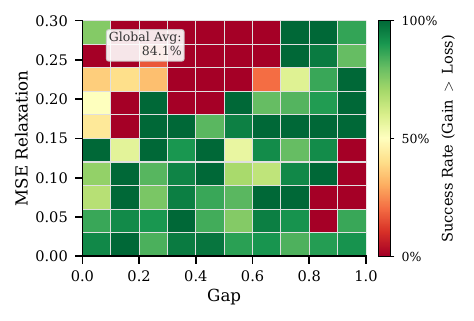}
    \caption{Local trade-offs between MSE relaxation and realism gain when moving from the top two MSE-optimal models in the Pareto front under $W_1$ metric on MG experiments.
    The percentage where relative realism gains ($W_1$) were greater than its MSE relaxation, binned by Gap ($x$-axis) and MSE relaxation ($y$-axis). 
    Colors show the percentage of trials where Gain $>$ Loss. Wider gaps offer wider relaxation success rates.}
    \label{fig:mse_realism_trade}
    \Description[Realism gains often exceed MSE losses]{Heatmap showing the percentage of trials where relative marginal realism gain exceeds MSE relaxation. The horizontal axis bins the conditional uncertainty gap and the vertical axis bins MSE relaxation. High-gap regions are able to show larger success rates for trading MSE losses for realism gains.}
\end{figure}

\subsection{Local Cost of Prioritizing MSE}
\label{sec:local_slope}

The Pareto front characterizes the full set of attainable accuracy--realism trade-offs, but standard evaluation protocols typically operate near the MSE-optimal extreme \cite{li2025tsfm_bench, green2025stratify, shao2024exploring_bench}.
We therefore analyze the \emph{local} behavior of the frontier to quantify how often a small relaxation in MSE yields a net-positive realism trade.

For each noise level and forecast horizon, we extract the Pareto front in the $(\mathrm{MSE}, W_1)$ plane and consider the first two Pareto-optimal models ordered by increasing MSE.
Let $\Delta_{\mathrm{MSE}}$ denote the relative MSE increase required to move from the MSE-optimal model to the next Pareto-optimal alternative, and let $\Delta_{W_1}$ denote the corresponding relative reduction in marginal $W_1$.
We call this step a \emph{successful trade} when $\Delta_{W_1} > \Delta_{\mathrm{MSE}}$.

Figure~\ref{fig:mse_realism_trade} reports that successful trades are prevalent across essentially all gap values (global average $84.1\%$), indicating that even small relaxations in MSE frequently deliver larger relative improvements in marginal realism.

\begin{figure}[t]
    \centering
    \includegraphics[width=\linewidth]{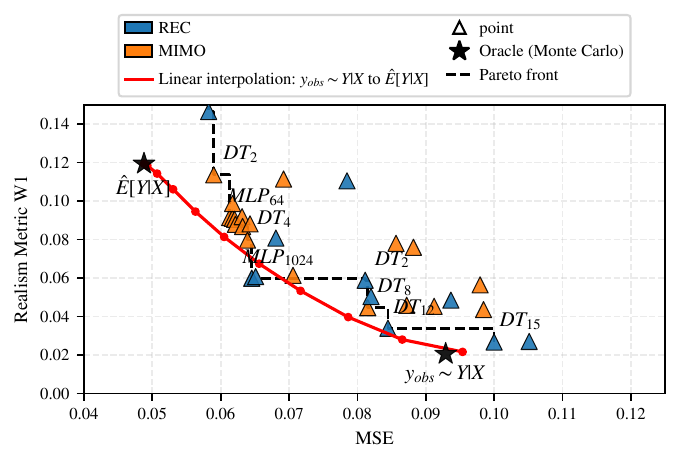}

\caption{
Inference rules navigate the accuracy--realism Pareto frontier.
Oracle prediction performance (stars) obtained via MC rollouts from the true conditional distribution shown as the expected future (mean $\mathbb{E}[Y\mid X]$) and single-sample inference ($Y \sim p(Y\mid X)$).
Linear interpolation (red line) shows that inference rules select operating points along the intrinsic $(\mathrm{MSE}, W_1)$ frontier even in the absence of epistemic error.
Recursive (REC) and direct multi-output (MIMO) point strategies shown in blue and orange respectively. MLP width and decision tree depth indicated by subscripts.
}
\label{fig:prob_forc_moving}
\Description[Inference choices select different frontier points]{Scatter plot of MSE against marginal Wasserstein distance. Oracle mean and oracle sample predictions are shown as stars, with interpolation between them. Recursive and direct multi-output forecasting strategies occupy different regions of the frontier, illustrating how inference rules trade point accuracy against marginal realism.}
\end{figure}

\begin{figure}[t]
    \centering
    \includegraphics[width=\linewidth]{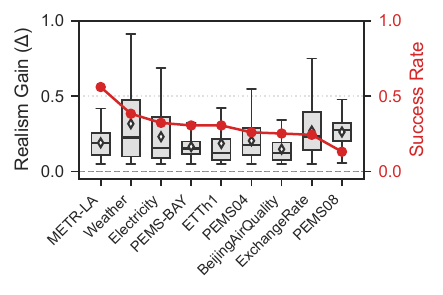}

   \caption{
Real-world accuracy--realism trade-offs under a $5\%$ MSE tolerance.
Boxplots show the distribution of relative marginal realism gains ($W_1$) attainable by Pareto-optimal alternatives whose MSE is within $5\%$ of the MSE-optimal model for each dataset.
Red markers indicate the success rate: the fraction of cases in which the realism gain exceeds the corresponding MSE loss.
Results highlight substantial attainable realism improvements in several datasets, alongside marked heterogeneity across domains.
}
    \label{fig:main_realism_success}
    \Description[Real datasets show realism gains within five percent MSE tolerance]{Boxplots showing relative marginal realism gains available within a five percent MSE tolerance across real-world datasets. Red markers show success rates for each dataset. The figure shows substantial realism improvements for several datasets, with variation across domains.}
\end{figure}

\subsection{Frontier Navigation via MSF Strategies}
\label{sec:strategy_positioning}

Having characterized the attainable accuracy–realism frontier, we now illustrate how common forecasting strategies \emph{navigate} this space.
Figure~\ref{fig:pareto_low_high} shows that direct (MIMO) and recursive (REC) strategies consistently occupy different regions of the frontier, particularly in high-gap regimes.
Direct strategies cluster near the MSE-optimal extreme, while recursive strategies more frequently trade accuracy for improved marginal realism.

Using the same high-gap regime, in Figure~\ref{fig:prob_forc_moving} we further isolate the role of inference using oracle access to the conditional distribution, thereby eliminating modeling and optimization error. Different summaries of the same predictive distribution induce distinct operating points: mean inference concentrates near the MSE-optimal extreme, while single-sample inference trades pointwise accuracy for improved marginal realism. This shows that probabilistic forecasting navigates, rather than resolves, the intrinsic accuracy–realism tension. 
Linear interpolation between these summaries traces a continuous path along the frontier. The same `inference-as-control' effect appears when we used a zero-shot Chronos Time Series Foundation Model \cite{ansari2024chronos} on real world datasets in Appendix \ref{app:chronos}. Theorem~\ref{thm:tradeoff} and Appendix~\ref{app:additional_theory} support these empirical results. 

\label{sec:real_world}
\begin{table*}[t]
    \centering
    \footnotesize 
    
    \begin{subtable}[b]{0.49\textwidth}
        \centering
        \resizebox{\linewidth}{!}{%
        \begin{tabular}{@{}l@{\hspace{4pt}}cccccc@{}}
        \toprule
        Dataset & REC\_P & MIMO\_P & f-REC$_{s}$ & f-MIMO$_{s}$ & f-REC$_{m}$ & f-MIMO$_{m}$ \\ \midrule
        MG                & 0.42 & 0.44 & 0.0 & 0.0 & 0.1  & 0.02 \\
        Electricity       & 0.17 & 0.4  & 0.0 & 0.0 & 0.0  & 0.42 \\
        ExchangeRate      & 0.38 & 0.28 & 0.0 & 0.0 & 0.0  & 0.34 \\
        Weather           & 0.38 & 0.29 & 0.0 & 0.0 & 0.0  & 0.34 \\
        METR-LA           & 0.19 & 0.76 & 0.0 & 0.0 & 0.0  & 0.04 \\
        PEMS-BAY          & 0.01 & 0.93 & 0.0 & 0.0 & 0.0  & 0.06 \\
        PEMS04            & 0.01 & 0.85 & 0.0 & 0.0 & 0.0  & 0.14 \\
        ETTh1             & 0.09 & 0.52 & 0.0 & 0.0 & 0.0  & 0.39 \\
        PEMS08            & 0.01 & 0.98 & 0.0 & 0.0 & 0.0  & 0.01 \\
        BeijingAirQuality & 0.2  & 0.59 & 0.0 & 0.0 & 0.0  & 0.15 \\ \midrule
        \rowcolor[HTML]{EFEFEF} 
        Average           & 0.19 & 0.6  & 0.0 & 0.0 & 0.01 & 0.19 \\ \bottomrule
        \end{tabular}%
        }
        \caption{Fraction of optimal selections under MSE loss.}
        \label{tab:table_mse}
    \end{subtable}
    \hfill 
    \begin{subtable}[b]{0.49\textwidth}
        \centering
        \resizebox{\linewidth}{!}{%
        \begin{tabular}{@{}l@{\hspace{4pt}}cccccc@{}}
        \toprule
        Dataset & REC\_P & MIMO\_P & f-REC$_{s}$ & f-MIMO$_{s}$ & f-REC$_{m}$ & f-MIMO$_{m}$ \\ \midrule
        MG                & 0.12 & 0.15 & 0.22 & 0.49 & 0.0 & 0.0  \\
        Electricity       & 0.01 & 0.1  & 0.02 & 0.77 & 0.0 & 0.09 \\
        ExchangeRate      & 0.57 & 0.2  & 0.0  & 0.16 & 0.0 & 0.08 \\
        Weather           & 0.92 & 0.04 & 0.0  & 0.0  & 0.0 & 0.04 \\
        METR-LA           & 0.7  & 0.0  & 0.0  & 0.29 & 0.0 & 0.0  \\
        PEMS-BAY          & 0.5  & 0.01 & 0.01 & 0.48 & 0.0 & 0.0  \\
        PEMS04            & 0.17 & 0.01 & 0.0  & 0.81 & 0.0 & 0.0  \\
        ETTh1             & 0.14 & 0.06 & 0.01 & 0.76 & 0.0 & 0.0  \\
        PEMS08            & 0.13 & 0.0  & 0.0  & 0.86 & 0.0 & 0.01 \\
        BeijingAirQuality & 0.68 & 0.04 & 0.04 & 0.23 & 0.0 & 0.0  \\ \midrule
        \rowcolor[HTML]{EFEFEF} 
        Average           & 0.39 & 0.06 & 0.03 & 0.49 & 0.0 & 0.02 \\ \bottomrule
        \end{tabular}%
        }
        \caption{Fraction of optimal selections under $W_1$  marginal loss.}
        \label{tab:table_w1}
    \end{subtable}
    
\caption{Comparison of REC and MIMO strategies on real-world datasets. Values indicate the frequency with which each method is optimal for the respective metric (a) for mse and (b) for marginal realism. Subscripts $s$ and $m$ correspond to sample-based and mean-based flows; ``--P'' denotes point methods and ``f--'' denotes generative methods.}
    \label{tab:main_comparison_table}
\end{table*}

\subsection{Opportunity Cost of MSE-Only Model Selection on Real Benchmarks}
\label{sec:real_world_cost}

We evaluate 9 real-world forecasting benchmarks from \cite{shao2024exploring_bench}, spanning energy demand, traffic, weather, air quality, and finance.
For each dataset, we treat the first 7 variates as independent univariate series and evaluate a forecast horizon of 200 steps.
This yields $9 \times 7 \times 200 = 12{,}600$ horizon-specific evaluations (Pareto fronts). Dataset details appear in Appendix~\ref{app:data}.

\textbf{Price of ``MSE-only'' evaluations.}
To quantify the practical cost of strict MSE-based selection, we fix a tolerance $\epsilon=5\%$ around the MSE-optimal model.
For each horizon, we identify near-ties whose MSE lies within $(1+\epsilon)\%$ of the optimum and record the best attainable improvement in marginal realism.
We call this a \emph{successful opportunity} when the realism gain exceeds the MSE loss (i.e., $\Delta \mathrm{Realism} > \Delta \mathrm{MSE}$).
This yields, for each dataset, a distribution of attainable realism improvements and a success probability.

Figure~\ref{fig:main_realism_success} reports these quantities across datasets.
Across all nine datasets, under a $5\%$ MSE tolerance, the median attainable marginal realism improvement is \textbf{17.3\%}, with a median success probability of \textbf{30.6\%}.
This shows that strict MSE-only selection can systematically discard substantially more realistic forecasts, even when accuracy differences are small.
At the dataset level, the effect is often larger.
For example, \textbf{Weather} admits a median realism improvement of \textbf{22.6\%} with success probability \textbf{38.4\%}, and \textbf{ExchangeRate} admits \textbf{23.3\%} with success probability \textbf{24.4\%}.
Even when success is less frequent, unrealized gains can remain large: \textbf{PEMS08} admits \textbf{27.7\%} median improvement but succeeds in \textbf{13.2\%} of cases.
As a robustness check, we reproduce the same $\epsilon=5\%$-opportunity analysis using zero-shot Chronos-T5 and observe consistent positive median realism gains with high success rates in Appendix \ref{app:chronos}.

\textbf{Benchmark-wide pattern and robustness.}
The opportunity cost varies across benchmarks, reflecting differences in
conditional uncertainty, model misspecification, and data regime. However,
the effect is not a corner case: every dataset contains cases where
near-MSE-optimal models offer materially better marginal realism. This
indicates that strict single-metric selection can be systematically
misleading even under a small and practically relevant MSE tolerance.

We further test whether this pattern is an artifact of the candidate model
pool. In Appendix~\ref{app:stronger_baselines}, we repeat the same
$5\%$ tolerance-band analysis using a broader set of forecasting models,
including Linear, MLP, LSTM, N-BEATS, PatchTST, and TCN point forecasters,
together with probabilistic variants. The same qualitative opportunity-cost
pattern remains: near-tied alternatives within the MSE band continue to
yield non-trivial marginal-realism gains across the evaluated datasets.
Thus, the observed effect is not specific to the simpler candidate families
used in the main sweep.

For probabilistic forecasts, we also examine whether improved marginal
realism is merely obtained by producing wider, less informative predictive
distributions. Appendix~\ref{app:calibration_width} reports calibration and
interval-width diagnostics for the best-realism candidate inside the same
MSE tolerance band. These diagnostics show heterogeneous behavior rather
than a uniform widening effect, so marginal realism, calibration, and
sharpness should be treated as complementary model-selection criteria rather
than interchangeable objectives.

\textbf{The uncertainty gap is not required for selection.}
In synthetic systems, repeated rollouts allow us to estimate the
conditional-uncertainty gap directly and relate it to the geometry of the
Pareto frontier. In real-world datasets, however, the conditional
distribution $p(Y_h \mid X)$ is not directly observable, so gap estimation is
necessarily approximate. 

Crucially, the practical opportunity-cost analysis does not require
estimating the uncertainty gap. It is computed directly from quantities
available under standard validation: candidate-model predictions,
ground-truth futures, MSE, and empirical realism metrics. The gap explains
why an accuracy--realism frontier should emerge under conditional
uncertainty, but the tolerance-band selection statistic itself remains
well-defined even when the gap is unknown or imperfectly estimated.

\subsection{Strategies at MSE and Realism Extremes}
\label{sec:extreme_membership}
While previous sections examined the geometry of the attainable accuracy--realism frontier, we now ask which forecasting strategies tend to occupy its \emph{extremes}.
Specifically, we analyze how different strategy and inference choices populate the MSE-optimal and realism-optimal ends across the 12,600 Pareto fronts attained (9 datasets × 7 variates × 200 horizons).

For each experimental configuration, we extract the Pareto front in the $(\mathrm{MSE}, W_1)$ plane and identify its two extreme members: the model with lowest MSE and the model with lowest marginal $W_1$.
We then record the forecasting strategy associated with each extreme and compute the proportion of times each strategy occupies these positions across all experiments.
Table~\ref{tab:main_comparison_table} reports these membership ratios, grouped by dataset and averaged across domains.

The results reveal a strong and systematic clustering of Pareto extremes by strategy and inference choice.
At the MSE-optimal extreme, direct MIMO point predictors dominate, accounting for $60\%$ of lowest-MSE solutions on average, followed by MIMO probabilistic models using mean-based inference ($19\%$).
Notably, MIMO probabilistic models using single-sample inference never attain the MSE-optimal position in any experiment.

In contrast, the realism-optimal extreme is dominated by strategies that propagate or inject variability.
Recursive point predictors account for $39\%$ of lowest-$W_1$ solutions on average, while MIMO probabilistic models using single-sample inference account for $49\%$.

These patterns indicate that the extremes of the accuracy--realism frontier are instead strongly determined by the choice of inference mechanism.
Direct or expectation-aligned inference consistently occupies the accuracy-optimal pole, while recursive and sample-based inference concentrate near the realism-optimal pole.
This separation reinforces the view that strategy and inference choices act as mechanisms for navigating a multi-objective forecasting space, rather than simply improving performance along a single axis.

\subsection{Practical Recommendation}
\label{sec:practical_selection}

Our results do not argue against MSE. Pointwise accuracy is often the
primary operational constraint. The problem is stricter: under conditional
uncertainty, models that are effectively tied in MSE can differ substantially
in whether they reproduce the marginal variability of realized futures.

We therefore recommend a tolerance-band selection rule. First, identify the
MSE-optimal model on validation data. Second, retain all candidates whose
MSE lies within an application-dependent tolerance $\epsilon$ of this optimum.
Third, choose among these near-ties using marginal realism, measured here by
$W_1$, as a simple diagnostic for dispersion. For probabilistic
forecasts, this realism check should be paired with calibration and
sharpness diagnostics, rejecting candidates with clearly worse coverage or
unnecessarily wider predictive intervals.

The tolerance $\epsilon$ encodes the downstream cost of sacrificing point
accuracy. When small MSE differences are operationally decisive, $\epsilon$
should be close to zero. When forecasts are used for simulation, scenario
generation, planning, stress testing, or long-horizon decision support,
modest accuracy losses may be acceptable if they yield futures that better
reflect realized variability. In our real-world experiments, even a $5\%$
MSE band often exposes substantially more realistic alternatives.

Thus, marginal realism should not be optimized in isolation, nor should it
replace proper probabilistic evaluation. Its role is more modest and
practical: keep MSE as the primary constraint, then use inexpensive
distributional diagnostics such as marginal $W_1$, together with calibration
and sharpness where available, to select among models that are effectively
tied under pointwise error.

\section{Related Work}
\label{sec:relwork}

\textbf{Multi-step forecasting strategies.}
Classical approaches to multi-step forecasting distinguish between recursive (iterated) and direct strategies, and analyze this choice primarily through a bias--variance trade-off under squared error loss \cite{taieb2014machine, green2025stratify, green2025epistemic, green2024time}. Early theoretical work showed that an oracle one-step predictor is generally biased for multi-step objectives \cite{taieb2014machine}. More recent analyses, however, indicate that task bias does not necessarily translate into model bias in finite-sample regimes \cite{green2025epistemic}. 
Existing empirical comparisons typically rank strategies using mean squared error (MSE), implicitly treating the conditional mean as the sole prediction target \cite{green2025stratify}. As a result, little attention has been paid to how different strategies affect the distributional properties of predicted trajectories or the variability of long-horizon forecasts.

\textbf{Forecasting benchmarks and evaluation protocols.}
Recent benchmarks for time series forecasting remain heavily centered on pointwise error metrics such as MSE or MAE \cite{li2025tsfm_bench, qiu2024tfb_bench, hu2025fintsb_bench, wang2024deep_bench, shao2024exploring_bench, strom2024performance_metrics}. However, this paradigm does not consider potential trade-offs between pointwise accuracy and distributional realism, nor does it examine whether near-optimal models under MSE may differ substantially in their distributional behavior.

\textbf{Distributional forecasting approaches.}
Parallel work on probabilistic forecasting, popularized by DeepAR \cite{salinas2017deepar}, emphasizes estimating predictive distributions rather than single point forecasts, and is commonly evaluated using proper scoring rules such as the CRPS \cite{gneiting2007strictly, jordan2019evaluating_scoring_forecast}. The ProbTS benchmark \cite{zhang2024probts_bench} explicitly evaluates both point and distributional forecasting across horizons, highlighting tensions between short-term probabilistic accuracy and long-term point accuracy. 
Related approaches include quantile forecasting \cite{lim2021temporal_tft}, conformal prediction methods that construct prediction intervals with coverage guarantees \cite{romano2019conformalized_conformal}, and generative models such as diffusion-based forecasters \cite{rasul2021timegrad}. These methods are often presented as alternatives to point forecasting, with the ability to model predictive distributions viewed as a way to mitigate over-smoothing or uncertainty miscalibration.

In contrast, we show that the tension between pointwise accuracy and distributional realism arises from the conditional distribution itself. Once a decision rule is applied—such as reporting the predictive mean, median, or a sampled trajectory—these methods correspond to different operating points along the same accuracy--realism frontier, rather than eliminating the underlying trade-off.

\textbf{Expectation versus typical realizations.}
The mismatch between conditional expectations and representative realizations is well known in other areas of machine learning, particularly in generative modeling, where minimizing squared error can lead to mode averaging and visually unrealistic samples \cite{metz2016unrolled_gan_mode}. In such settings, the conditional mean is often not a meaningful output, and research has therefore shifted toward distributional or perceptual objectives that prioritize realistic samples over pointwise accuracy \cite{theis2015note}. 

In contrast, time-series forecasting has historically remained centered on pointwise error metrics, with the conditional expectation treated as the canonical prediction target \cite{le2020probabilistic}. We instead show that, under nonzero conditional uncertainty, the mismatch between expectations and typical realizations is structurally unavoidable for deterministic predictors evaluated under squared loss, reframing oversmoothing as an intrinsic property of the forecasting objective rather than a limitation of model capacity or training.

\textbf{Positioning of this work.}
In contrast to prior studies, which treat pointwise and distributional evaluation as separate modeling choices, we show that their incompatibility arises from a structural property of the conditional distribution itself. While multi-objective work has explored trade-offs between accuracy and practical costs \cite{navon2021pareto, fischer2024autoxpcr_multi_ob}, it typically assumes only the MSE notion of accuracy. We instead show that, when conditional uncertainty is nonzero, no deterministic predictor can simultaneously minimize MSE and match the marginal distribution of realized futures, establishing an intrinsic trade-off between point accuracy and distributional realism.

\section{Discussion and future work}
\label{sec:discussion}

\textbf{The practical cost of MSE-only evaluation.}
Across nine real-world benchmarks, allowing a modest 5\% relaxation in MSE yields a median improvement of 17.3\% in marginal realism. This indicates that strict MSE-only model selection can systematically discard substantially more realistic forecasts, even when accuracy differences are small. 
This effect is not a corner case: every dataset exhibits regimes in which near-MSE-optimal models provide materially improved marginal distributional behavior. Together with our theoretical results, this suggests that single-metric evaluation can be structurally misleading in long-horizon forecasting. 


\textbf{Strategy and inference as functional selection.}
Recursive and direct strategies are traditionally compared through bias and variance analyses under squared loss. Our results suggest a more general interpretation: strategy and inference choices implicitly select which functional of the conditional future distribution is emphasized. 
Direct multi-output strategies and mean-based inference align with conditional expectations and therefore concentrate near the MSE-optimal extreme. Recursive strategies and sample-based inference propagate variability and more often occupy realism-favoring regions of the frontier. 
This perspective helps explain why empirical dominance between strategies is inconsistent across horizons and datasets \cite{green2025stratify}: they are not optimizing the same single objective \cite{taieb2014machine}, but navigating different regions of a multi-objective forecasting space.

\textbf{Implications for benchmarking and evaluation.}
Current forecasting benchmarks predominantly rank methods using pointwise error metrics such as MSE or MAE. Our results indicate that this practice can obscure meaningful differences in distributional behavior at long horizons. 
A simple practical adjustment is to complement pointwise metrics with a distributional realism metric or to evaluate models within a small MSE tolerance band and compare attainable realism gains. 
More broadly, we advocate reporting accuracy--realism trade-offs—such as using Pareto fronts in forecasting model selection \cite{borchert2022multi_ts_pareto} or dominance relations—rather than collapsing performance into a single scalar metric.

\paragraph{Limitations and future directions.}
Our main theoretical result is stated for deterministic predictors and deterministic inference rules. While probabilistic models can be embedded into this framework through specific decision rules, a full characterization of the accuracy--realism frontier for stochastic predictors remains open.
Our notion of realism is also intentionally limited. Marginal $W_1$ captures a simple and observable failure mode of MSE-optimal forecasting: under-dispersion across realized futures. It should not be interpreted as a sufficient condition for conditional correctness, calibration, or
trajectory-level realism. In probabilistic forecasting settings, marginal realism should therefore be used alongside proper scoring rules, calibration diagnostics, and sharpness measures.
Extending the analysis to trajectory-level objectives and nonstationary settings, such as concept drift, is a natural direction for future work.

\subsection{Conclusion}
\label{sec:conc}
We showed that when conditional uncertainty is nonzero, no deterministic predictor can simultaneously minimize mean squared error and match the marginal distribution of realized futures. This induces a fundamental accuracy--realism trade-off in multi-step forecasting.
Empirically, across nine real-world benchmarks, a 5\% tolerance in MSE yields a median 17.3\% gain in marginal realism, demonstrating a tangible opportunity cost to strict MSE-only model selection.
Our findings suggest that long-horizon forecasting performance is better understood as navigating an intrinsic accuracy--realism frontier rather than optimizing a single pointwise error metric.
The practical implication is not to discard MSE, but to treat long-horizon forecasting under conditional uncertainty as a constrained multi-objective selection problem: keep MSE as the primary constraint, define an application-dependent tolerance band around the MSE optimum, and use inexpensive distributional diagnostics such as marginal $W_1$, together with calibration and sharpness checks where available, to choose among near-tied candidates.

\bibliographystyle{ACM-Reference-Format}
\bibliography{concise_references}


\appendix

\section{Dataset and Experimental Details}
\label{app:data}

\subsection{Real-world benchmarks}
We use the nine real-world datasets from \cite{shao2024exploring_bench}:
ETTh1 ($T{=}14{,}400$), Electricity ($T{=}26{,}304$), METR-LA
($T{=}34{,}272$), PEMS04 ($T{=}16{,}992$), PEMS08 ($T{=}17{,}856$),
PEMS-BAY ($T{=}52{,}116$), Weather ($T{=}52{,}696$),
BeijingAirQuality ($T{=}36{,}000$), and ExchangeRate ($T{=}7{,}588$).
For each dataset, we treat the first seven variates as separate
univariate forecasting tasks.

Each series is split chronologically, with the first $80\%$ used for
training and the remaining $20\%$ for evaluation. All experiments use a
lag window of length $L=20$ and a forecast horizon of $H=200$. We draw
$N_{\mathrm{eval}}=1000$ evaluation windows uniformly from the test
portion of each series. Each univariate series is standardized using the
training-set mean and standard deviation, and all reported metrics are
computed in this normalized space.

\subsection{Forecasting models and inference rules}
The main empirical sweep evaluates deterministic point forecasters under
recursive (REC) and direct multi-output (MIMO) strategies using Linear
Ridge Regression, MLPs, and Decision Trees. We additionally train
conditional normalizing flows under both REC and MIMO formulations and
evaluate them using deterministic inference rules such as predictive
means and single-sample rollouts. All models are trained with fixed
hyperparameters and without dataset-specific tuning: Ridge uses
$\alpha=0.1$, MLPs use one 2048-unit hidden layer with
$\alpha\in\{0,8\}$, decision trees use maximum depth 10, and RealNVP
flows use 12 affine coupling layers with hidden widths
$\{16,64,256\}$. The purpose of the model set is to sample attainable
operating points on the accuracy--realism frontier rather than to
exhaustively optimize each benchmark.

\subsection{Synthetic Mackey--Glass setting}
\label{app:synthetic_details}
For the controlled experiments in Section~4.1, we generate data from a
stochastic Mackey--Glass system with additive state noise of magnitude
$\sigma_s$, using the standard delayed-feedback recursion from
\cite{mackey1977oscillation}. Unless otherwise stated, we use
$a=0.1$, $\gamma=0.2$, $\tau=17$, a burn-in of $100$ steps, total
length $T=6000$, lag $L=20$, horizon $H=200$, and an $80/20$
chronological split. For each selected test context, we approximate the
conditional distribution using $1000$ Monte Carlo rollouts

\subsection{Auxiliary realism metrics}
\label{app:realism_metrics}
Our primary realism metric is marginal $W_1$, because it is
hyperparameter free, horizon-local, and directly targets the
underdispersion mechanism central to the paper. As complementary
structural checks, we also report Dynamic Time Warping (DTW)
\cite{sakoe1978dynamic}, vector Wasserstein distance, order-preserving
Wasserstein (OPW) \cite{su2017order_wasser}, and fused
Gromov--Wasserstein (FGW) \cite{vayer2020fused}. These trajectory-level
metrics emphasize different aspects of temporal alignment, value
distribution, and geometric structure. We omit full equations here for space; our main empirical claims are in terms of marginal $W_1$.

We do not rely on real-world estimates of the conditional uncertainty
gap in the final model-selection analysis. Any finite-sample proxy for
the gap mixes the underlying signal with estimator bias and variance, so
we treat such quantities as explanatory diagnostics rather than
deployment-time requirements.

\section{Additional Robustness Checks}
\label{app:robustness}

\subsection{Stronger candidate model set}
\label{app:stronger_baselines}
To test whether the tolerance-band effect persists under a broader
candidate pool, we repeat the same $5\%$ MSE tolerance-band analysis
using an expanded set of forecasting models. The point-forecasting pool
includes Linear, MLP, LSTM, N-BEATS, PatchTST, and TCN models. The
probabilistic pool includes Linear, MLP, LSTM, normalizing flow,
PatchTST, and TCN variants.

For each horizon-specific evaluation, we identify the MSE optimal
candidate and then select the candidate with the best marginal realism
among all models whose MSE lies within $5\%$ of this optimum. Table
\ref{tab:stronger_baselines} reports the relative marginal-realism gain
of this near-MSE-optimal alternative, together with the percentage of
cases in which the realism gain exceeds $5\%$.

\begin{table}[t]
\centering
\setlength{\tabcolsep}{3pt}
\begin{tabular}{@{}lcccc@{}}
\toprule
Dataset & Mean & Med. & IQR & Gain$>5$ \\
\midrule
ExchangeRate & 21.7 & 11.7 & [3.9, 33.5] & 70.9 \\
PEMS08       & 15.1 & 12.5 & [5.9, 20.8] & 77.5 \\
Weather      & 12.5 & 8.8  & [3.2, 16.1] & 67.3 \\
ETTh1        & 18.5 & 17.1 & [7.8, 27.5] & 83.2 \\
MG           & 15.5 & 13.2 & [9.4, 18.0] & 94.1 \\
\bottomrule
\end{tabular}
\caption{Robustness of the $5\%$ MSE tolerance-band analysis under a
stronger candidate model set. Mean, median, and IQR denote relative
marginal-realism gains (\%) for the best-realism model inside the
$5\%$ MSE band compared with the MSE-optimal model. Gain$>5$ denotes
the percentage of cases in which the realism gain exceeds $5\%$.}
\label{tab:stronger_baselines}
\end{table}

The opportunity-cost pattern remains visible under the expanded model
set. Median marginal-realism gains are non-trivial across all evaluated
datasets, and the proportion of cases with realism gain above $5\%$
ranges from $67.3\%$ to $94.1\%$. This indicates that the practical
effect identified in the main text is not specific to the simpler model
families used in the main sweep.

\subsection{Probabilistic forecasts: calibration and width}
\label{app:calibration_width}
Marginal realism alone does not determine whether a probabilistic
forecast is conditionally well calibrated. A model could improve
marginal distributional agreement by producing less
informative wider predictive distributions. We complement the
marginal-realism analysis with coverage and interval-width diagnostics.

For each horizon-specific evaluation, we again consider the
best-realism candidate inside the $5\%$ MSE band and compare it with
the MSE-optimal candidate. Table~\ref{tab:calibration_width} reports
the median realism gain, the change in empirical $80\%$ coverage, and
the change in predictive interval width. Positive $\Delta C_{80}$
indicates improved agreement with nominal $80\%$ coverage. Positive
$\Delta I$ indicates wider intervals and hence lower sharpness.

\begin{table}[t]
\centering
\small
\setlength{\tabcolsep}{2.5pt}
\begin{tabular}{@{}lccccc@{}}
\toprule
Dataset
& Med. $\Delta R$
& Mean $\Delta C_{80}$
& Med. $\Delta C_{80}$
& Mean $\Delta I$
& Med. $\Delta I$ \\
\midrule
ExchangeRate & 23.1 & 1.306 & 0.000 & 14.3  & 26.3 \\
PEMS08       & 13.5 & 0.308 & 0.000 & -2.9  & -1.2 \\
Weather      & 9.6  & 0.209 & 0.017 & -0.03 & -0.00 \\
ETTh1        & 13.7 & 0.727 & 0.264 & 2.4   & 0.0 \\
MG           & 13.1 & 0.401 & 0.000 & -1.8  & 0.0 \\
\bottomrule
\end{tabular}
\caption{Calibration and interval-width diagnostics for the
best-realism probabilistic forecast inside the $5\%$ MSE band compared
with the MSE-optimal forecast. $\Delta R$ is the relative
marginal-realism gain (\%). Positive $\Delta C_{80}$ indicates improved
empirical $80\%$ coverage. $\Delta I$ is the relative change in
predictive interval width (\%); positive values indicate wider and hence
less sharp intervals.}
\label{tab:calibration_width}
\end{table}

The relation between marginal realism, calibration, and width is
heterogeneous. The realism-selected candidate is not typically worse
under the $80\%$ coverage diagnostic, and realism gains are not
uniformly obtained by widening predictive intervals. ExchangeRate shows
a clear increase in width, whereas PEMS08, Weather, and MG exhibit zero
or negative median width changes while still obtaining non-trivial
realism gains. This rules out the simplest universal explanation that
realism gains are always purchased through indiscriminate
over-dispersion.

\subsection{Robustness across alternative realism metrics}
\label{app:multi_metric_table}
Table~\ref{tab:multi_metric_opportunity_cost} reports the same
$\epsilon{=}5\%$ opportunity-cost analysis under multiple notions of
realism. For each dataset, variate, and horizon-specific evaluation, we
identify the MSE-optimal model and then consider Pareto-optimal
alternatives whose MSE is within $(1+\epsilon)\%$ of the optimum.
Within this near-tie set, we record the best attainable improvement in
realism for a given metric and the hit rate
$\Delta \mathrm{Realism} > \Delta \mathrm{MSE}$.

\begin{table}[t]
    \centering
    \small
    \setlength{\tabcolsep}{3pt}
    \begin{tabular}{lcccc}
        \toprule
        Metric & $\widetilde{\Delta R}$ & $\mathrm{IQR}(\Delta R)$ & $\widetilde{p}_{\mathrm{hit}}$ & $\mathrm{IQR}(p_{\mathrm{hit}})$ \\
        \midrule
        $W_1$      & 0.17 & [0.10,\,0.28] & 0.31 & [0.25,\,0.32] \\
        DTW        & 0.07 & [0.06,\,0.10] & 0.10 & [0.08,\,0.12] \\
        $W_1$-vec  & 0.08 & [0.06,\,0.10] & 0.15 & [0.11,\,0.16] \\
        OPW        & 0.09 & [0.07,\,0.12] & 0.04 & [0.01,\,0.10] \\
        FGW        & 0.11 & [0.07,\,0.17] & 0.19 & [0.16,\,0.23] \\
        \bottomrule
    \end{tabular}
    \caption{Multi-metric opportunity cost under an MSE tolerance of
    $\epsilon=5\%$. For each realism metric, we report the median across
    datasets of the dataset-level median realism gain
    ($\widetilde{\Delta R}$) and the dataset-level hit rate
    ($\widetilde{p}_{\mathrm{hit}}$), together with the corresponding
    interquartile ranges across datasets. ``Hit'' denotes
    $\Delta \mathrm{Realism} > \Delta \mathrm{MSE}$.}
    \label{tab:multi_metric_opportunity_cost}
\end{table}

The opportunity-cost effect remains visible beyond marginal $W_1$, but
its magnitude depends strongly on the realism notion used. Marginal
$W_1$ yields the largest gains and highest hit rates because it targets
the horizon-wise dispersion mismatch central to our theory, whereas
trajectory metrics impose stronger structural correspondence and thus
typically require larger departures from the MSE optimum.

\subsection{What different realism metrics detect}
\label{app:what_each_metric}

A useful realism metric should reflect the kind of mismatch one aims to
detect. In our setting, the primary failure mode is
\emph{under-dispersion}: the MSE-optimal conditional expectation can be
accurate on average while becoming unrepresentative of typical realized
futures. Under a single-realization evaluation protocol, this suggests
three desiderata for a diagnostic metric. First, two independent samples
from the same data-generating process should be judged similar. Second,
the conditional expectation should move further away from a realized
sample as conditional uncertainty grows. Third, if temporal structure is
part of the realism notion, then a sample should be distinguishable from
a within-trajectory permutation of itself.

\begin{figure}[t]
    \centering
    \includegraphics[width=\linewidth]{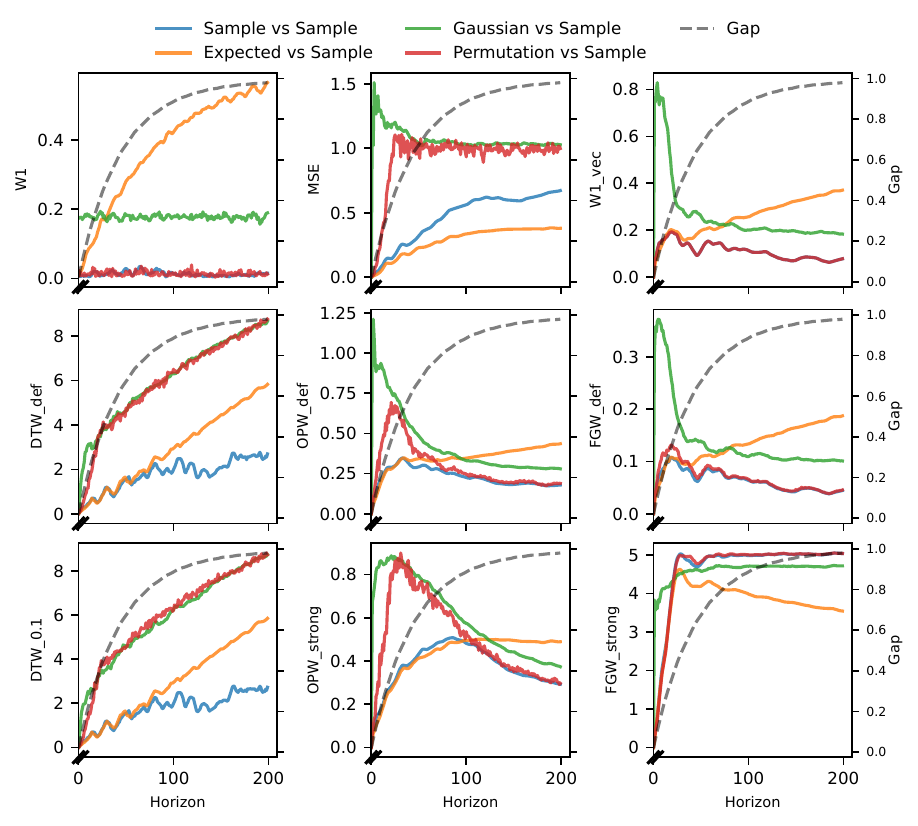}

    \caption{\textbf{Metric audit on the stochastic Mackey--Glass
    system.} We compare four deterministic sequences against an
    independent DGP sample: sample vs sample, conditional
    expectation vs sample, variance matched Gaussian vs sample, and
    permutation vs sample. The diagnostic asks whether a metric
    (i) keeps sample-vs-sample small, (ii) makes
    expectation-vs-sample grow with uncertainty, and
    (iii) penalizes permutations when temporal structure is broken.
    Marginal $W_1$ satisfies (i) and (ii) cleanly, but not (iii), since it ignores order. DTW, OPW, FGW, and $W_{1,\mathrm{vec}}$ are more sensitive to temporal structure, 
    but also respond more strongly to alignment and point-wise deviations.}
    
    \label{fig:metric_audit_controlled_baselines}
    \Description[Metric audit compares sensitivity to uncertainty and temporal order]{Three-by-three grid of line plots comparing realism metrics on stochastic Mackey--Glass sequences. Rows correspond to different metrics and columns compare sample, expectation, Gaussian, and permutation baselines against independent samples. The figure shows that marginal Wasserstein distance clearly detects under-dispersion but ignores temporal ordering, while trajectory metrics are more sensitive to ordering and alignment.}
\end{figure}

Figure~\ref{fig:metric_audit_controlled_baselines} shows that marginal
$W_1$ is the clearest diagnostic for the phenomenon studied here. It
keeps sample-vs-sample near zero while making
expectation-vs-sample grow with uncertainty, directly exposing the
under-dispersion of the conditional mean. Its limitation is that
permutation-vs-sample is also small, since marginal $W_1$ ignores
temporal order.

DTW, OPW, FGW, and $W_{1,\mathrm{vec}}$ attempt to encode stronger notions of
trajectory similarity. However, this makes them more sensitive to ordering and
shape distortions, but also more correlated with pointwise error, which
helps explain their smaller opportunity-cost gains. We therefore use
marginal $W_1$ as the paper's primary realism axis and treat the trajectory metrics as complementary checks.

\subsection{Zero-shot Chronos robustness}
\label{app:chronos}

As an additional robustness check using a strong forecasting
foundation model, we evaluate Amazon Chronos-T5
\cite{ansari2024chronos} in zero-shot mode on the same real-world
benchmarks. We use the same dataset splits as in the main experiments,
standardize each series independently, and construct rolling
forecasting instances with lookback $512$ and horizons up to $50$,
following the operating range recommended by the Chronos authors.

We evaluate three Chronos sizes
(\texttt{chronos-t5-tiny}, \\ \texttt{chronos-t5-mini},
\texttt{chronos-t5-small}) across sampling temperatures
$\tau\in\{0.6,0.8,1.0,1.2,1.4\}$. For each context we draw $100$
forecast samples and form operating points using either individual
sampled trajectories or means of multiple samples. As in the main
paper, sample-based operating points occupy the realism-favored region,
whereas mean-of-samples forecasts concentrate near the MSE-favored
extreme.

\begin{figure}[t]
    \centering
    \includegraphics[width=\linewidth]{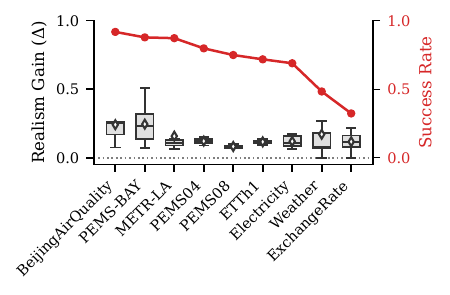}

    \caption{Reproduction of the main-text $\epsilon$-opportunity
    summary using zero-shot Chronos-T5 (lookback $512$, $H\le 50$),
    following the settings recommended in \cite{ansari2024chronos}.}
    \label{fig:chronos}
    \Description[Chronos results reproduce the opportunity-cost pattern]{Boxplots summarizing zero-shot Chronos-T5 results under the same five percent MSE tolerance analysis. Each dataset shows attainable marginal realism gains, with red markers indicating success rates. The figure shows that the accuracy--realism opportunity pattern remains visible for Chronos forecasts.}
\end{figure}

The same $\epsilon$-opportunity pattern remains visible under Chronos.
Across datasets we observe consistently positive median realism gains
together with generally high hit rates. The strongest results are on
BeijingAirQuality ($0.92$ hit rate; median/mean gain $0.25/0.24$),
PEMS-BAY ($0.88$; $0.23/0.24$), and METR-LA \\($0.87$; $0.10/0.16$).
We also observe positive median gains on PEMS04 ($0.80$;
$0.12/0.12$), PEMS08 ($0.75$; $0.08/0.08$), ETTh1 ($0.72$;
$0.11/0.12$), Electricity ($0.69$; $0.11/0.12$), Weather ($0.48$;
$0.08/0.17$), and ExchangeRate ($0.32$; $0.11/0.12$). Thus, the
practical opportunity cost of strict MSE-only selection is also
visible for a strong zero-shot foundation model.

\section{Additional Theoretical Results}
\label{app:additional_theory}

This section records short extensions of
Theorem~\ref{thm:tradeoff}.

\subsection{Why determinism matters}
\label{sec:stochastic_forecasters}
Theorem~\ref{thm:tradeoff} is stated for deterministic point
predictors, which is the standard setting for MSE-trained multi-step
forecasters. The key fact is that stochastic randomization cannot
improve expected MSE beyond the conditional mean.

\begin{proposition}[Randomization cannot improve MSE]
\label{prop:rand_mse}
Let $\widehat{Y}_h$ be any (possibly randomized) predictor given
$\mathbf{X}$ with finite second moment. Then
\[
\mathbb{E}\!\left[(\widehat{Y}_h-Y_h)^2\right]
\ge
\mathbb{E}\!\left[(\mathbb{E}[Y_h\mid \mathbf{X}]-Y_h)^2\right],
\]
with equality if and only if
$\widehat{Y}_h=\mathbb{E}[Y_h\mid \mathbf{X}]$ almost surely.
\end{proposition}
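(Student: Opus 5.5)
We treat a randomized predictor as $\widehat{Y}_h = g(\mathbf{X}, U)$, where $U$ is auxiliary randomness independent of $Y_h$ given $\mathbf{X}$. This is the defining property of a predictor that only "sees" $\mathbf{X}$ plus internal noise. Write $m(\mathbf{X}) = \E[Y_h\mid\mathbf{X}]$. The plan is the orthogonal (Pythagorean) decomposition of squared error around $m(\mathbf{X})$, with the cross term killed by conditioning on $(\mathbf{X},U)$.

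First, expand the loss:
\[
\E[(\widehat{Y}_h - Y_h)^2] = \E[(\widehat{Y}_h - m(\mathbf{X}))^2] + \E[(m(\mathbf{X}) - Y_h)^2] + 2\,\E[(\widehat{Y}_h - m(\mathbf{X}))(m(\mathbf{X}) - Y_h)].
\]
All three terms are finite. $\widehat{Y}_h$ has finite second moment by assumption. $Y_h$ does too, since $\Var(y_t)<\infty$ under the standing setup. Hence $m(\mathbf{X})$ is square-integrable by conditional Jensen, and the cross term is integrable by Cauchy--Schwarz.

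Second, show the cross term vanishes. Condition on $\sigma(\mathbf{X},U)$; the factor $\widehat{Y}_h - m(\mathbf{X})$ is measurable with respect to it. By conditional independence of $Y_h$ and $U$ given $\mathbf{X}$, we get $\E[Y_h\mid \mathbf{X},U] = m(\mathbf{X})$. So $\E[m(\mathbf{X}) - Y_h\mid\mathbf{X},U]=0$, and the tower property gives zero for the cross term. This is the only step that needs care: it is exactly where "randomized predictor given $\mathbf{X}$" must be formalized. Without the conditional-independence assumption, $U$ could leak information about $Y_h$ and the claim would fail. I would state this modelling assumption explicitly at the outset. The deterministic case $U$ trivial is the special case already used in the proof sketch of Theorem~\ref{thm:tradeoff}.

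Third, conclude. We now have $\E[(\widehat{Y}_h-Y_h)^2] = \E[(m(\mathbf{X})-Y_h)^2] + \E[(\widehat{Y}_h-m(\mathbf{X}))^2]$, which gives the inequality. Equality holds iff $\E[(\widehat{Y}_h - m(\mathbf{X}))^2]=0$, i.e.\ iff $\widehat{Y}_h = m(\mathbf{X})$ almost surely. In particular, any genuinely random forecaster strictly loses MSE: its output, conditionally on $\mathbf{X}$, cannot be a.s.\ constant equal to $m(\mathbf{X})$. Equivalently, the excess risk equals $\E[\Var(\widehat{Y}_h\mid\mathbf{X})] + \E[(\E[\widehat{Y}_h\mid\mathbf{X}]-m(\mathbf{X}))^2]$, which I would mention as a remark.
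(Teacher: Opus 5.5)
Your proof is correct and follows essentially the paper's route: the same Pythagorean decomposition of the squared error around $\E[Y_h\mid\mathbf{X}]$, with the cross term vanishing and then expectations taken. The only difference is that you condition on $(\mathbf{X},U)$ and state explicitly the assumption $U\perp Y_h\mid\mathbf{X}$. This is exactly what the paper's sketch tacitly relies on when it asserts that the cross term vanishes after conditioning on $\mathbf{X}$ alone.
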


\noindent\textit{Proof sketch.}
Conditioning on $\mathbf{X}$ gives
\[
\mathbb{E}\!\left[(\widehat{Y}_h-Y_h)^2 \mid \mathbf{X}\right]
=
\mathbb{E}\!\left[(\widehat{Y}_h-\mathbb{E}[Y_h\mid \mathbf{X}])^2
\mid \mathbf{X}\right]
+
\mathrm{Var}(Y_h\mid \mathbf{X}),
\]
where the cross-term vanishes. Taking expectation over $\mathbf{X}$
yields the result. Thus, any stochasticity that helps match a
distributional target must be paid for in expected MSE.

\subsection{Single-context trajectory mismatch}
\label{sec:trajectory_dirac}
Theorem~\ref{thm:tradeoff} compares marginal laws after averaging over
contexts. A complementary statement holds at the level of conditional
trajectory laws for a fixed context.

Let $\mathbf{Y}_{:H}=(Y_1,\ldots,Y_H)\in\mathbb{R}^H$ denote the
future trajectory and define
\[
P_x := \mathcal{L}(\mathbf{Y}_{:H}\mid \mathbf{X}=x), \qquad
\mathbf{m}(x) := \mathbb{E}[\mathbf{Y}_{:H}\mid \mathbf{X}=x].
\]
The Bayes-optimal deterministic predictor under vector MSE is
$\mathbf{m}(x)$. Because this predictor is deterministic, its
conditional predictive law is the Dirac measure
$\delta_{\mathbf{m}(x)}$.


\begin{proposition}[Conditional trajectory mismatch]
\label{prop:traj_w2}
Assume $\mathbb{E}\|\mathbf{Y}_{:H}\|^2<\infty$. Then for any fixed
context $x$,
\begin{align*}
W_2^2\!\left(P_x,\delta_{\mathbf{m}(x)}\right)
&=
\mathbb{E}\!\left[\left\|\mathbf{Y}_{:H}-\mathbf{m}(x)\right\|^2
\,\middle|\, \mathbf{X}=x\right] \\
&=
\mathrm{tr}\!\left(\mathrm{Cov}(\mathbf{Y}_{:H}\mid \mathbf{X}=x)\right).
\end{align*}
In particular, if
$\mathrm{tr}(\mathrm{Cov}(\mathbf{Y}_{:H}\mid \mathbf{X}=x))>0$,
then $W_2(P_x,\delta_{\mathbf{m}(x)})>0$.
\end{proposition}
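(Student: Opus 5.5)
The plan is to use the fact that the only coupling between any probability measure $P$ and a Dirac mass $\delta_{\mathbf{m}}$ is the product coupling $P\otimes\delta_{\mathbf{m}}$. Once this is established, the infimum defining $W_2$ is attained trivially, and everything reduces to computing a conditional second moment.

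First, I will fix the context $x$ and work under the regular conditional law $P_x$ on $\mathbb{R}^H$. This law exists because $\mathbb{R}^H$ is Polish, and it has a finite second moment for almost every $x$ since $\mathbb{E}\|\mathbf{Y}_{:H}\|^2<\infty$. I read ``any fixed context'' in this almost-everywhere sense, or for a version of $P_x$ that has a finite second moment.

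Next, I will take an arbitrary coupling $\pi$ of $P_x$ and $\delta_{\mathbf{m}(x)}$. Its second marginal gives $\pi(\mathbb{R}^H\times\{\mathbf{m}(x)\})=1$. For any measurable rectangle $A\times B$, this yields $\pi(A\times B)=P_x(A)\,\mathbf{1}_B(\mathbf{m}(x))$. Hence $\pi=P_x\otimes\delta_{\mathbf{m}(x)}$ by uniqueness of extension from rectangles. Consequently
\[
W_2^2(P_x,\delta_{\mathbf{m}(x)})=\int\|\mathbf{y}-\mathbf{m}(x)\|^2\,dP_x(\mathbf{y})=\mathbb{E}\!\left[\|\mathbf{Y}_{:H}-\mathbf{m}(x)\|^2\,\middle|\,\mathbf{X}=x\right].
\]

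Then I will expand the squared norm coordinatewise as $\sum_{h=1}^H (Y_h-m_h(x))^2$. Because $\mathbf{m}(x)$ is the mean of $P_x$, the conditional expectation of each term equals $\mathrm{Var}(Y_h\mid\mathbf{X}=x)$. Their sum is $\mathrm{tr}(\mathrm{Cov}(\mathbf{Y}_{:H}\mid\mathbf{X}=x))$.

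The ``in particular'' clause follows immediately, since a positive trace makes $W_2^2$ positive. The only delicate point is the measure-theoretic care in the first step about versions of the conditional law. The coupling argument itself is elementary, so I do not expect a real obstacle.
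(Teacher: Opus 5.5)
Your proposal is correct and follows the paper's own argument: the only coupling of $P_x$ with a Dirac mass is the product coupling, so $W_2^2(P_x,\delta_{\mathbf{m}(x)})$ reduces to the conditional second moment about $\mathbf{m}(x)$, which equals the trace of the conditional covariance. Your rectangle/uniqueness argument and the almost-every-$x$ caveat make the paper's sketch rigorous, and you rightly skip its extra minimization over the Dirac location $a$, which the stated identity does not need.
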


\noindent\textit{Proof sketch.}
The only coupling between $P_x$ and a Dirac mass $\delta_a$ pairs
$\mathbf{Y}_{:H}$ with the constant vector $a$, so
$W_2^2(P_x,\delta_a)=\mathbb{E}[\|\mathbf{Y}_{:H}-a\|^2\mid
\mathbf{X}=x]$. Minimizing over $a$ gives $a=\mathbf{m}(x)$, and the
result follows from the standard identity between expected squared
deviation and the trace of the conditional covariance matrix.

\paragraph{Implication for trajectory-level law metrics.}
Proposition~\ref{prop:traj_w2} is stated with $W_2$, but the mismatch is
not specific to Wasserstein distance. Whenever $P_x$ is non-degenerate,
we have $P_x \neq \delta_{\mathbf{m}(x)}$, so any trajectory-level
metric or divergence on \emph{probability laws over trajectories} that
vanishes only when the two laws are equal must also be strictly positive
at $(P_x,\delta_{\mathbf{m}(x)})$.
Moreover, equality of
trajectory laws implies equality of all horizon-wise marginals by
coordinate projection; hence a mismatch at any marginal horizon already
rules out any stronger trajectory-law match.
This statement does not automatically extend to arbitrary
instance-level sequence distances such as DTW between two sampled
trajectories, which compare individual realizations rather than
conditional laws.

\end{document}